\newtheorem{theorem}{Theorem}
\DeclareMathOperator*{\argmax}{arg\,max}
\title{Latent Contextual Bandits and their Application to Personalized
  Recommendations for New Users}
\author{Li Zhou \and Emma Brunskill \\
Computer Science Department \\ Carnegie Mellon University \\
\{lizhou, ebrun\}@cs.cmu.edu}
\begin{document}
\maketitle

\begin{abstract}
  Personalized recommendations for new users, also known as the cold-start
  problem, can be formulated as a contextual bandit problem. Existing contextual
  bandit algorithms generally rely on features alone to capture user
  variability. Such methods are inefficient in learning new users' interests. In
  this paper we propose Latent Contextual Bandits. We consider both the benefit
  of leveraging a set of learned latent user classes for new users, and how we
  can learn such latent classes from prior users. We show that our approach
  achieves a better regret bound than existing algorithms. We also demonstrate
  the benefit of our approach using a large real world dataset and a preliminary
  user study.
\end{abstract}

\section{Introduction}
In general we desire recommender systems that can quickly start providing good
recommendations for \emph{new users}, which is particularly challenging as no
prior information for new users is available. This is often known as the
\emph{cold-start} problem. Despite the lack of prior information for new users,
such systems typically have interacted with millions of previous
users. Therefore, this problem can be cast as an instance of lifelong learning
across sequential decision making tasks: how should information from prior users
be leveraged to help improve the recommendations for a new user? Standard
techniques like collaborative filtering~\cite{koren2009matrix} provide good
answers to this challenge, but such approaches are typically limited to
myopically providing a single recommendation, rather than reasoning about the
multi-step interactions the system may have with the user. This is important,
because across a sequence of interactions it may be useful for the system to
actively gather information (potentially sacrifice immediate performance
outcomes) in order to maximize its benefit over the longer run with the
individual in question.

One approach to this is to model users by a contextual bandit
model~\cite{bubeck2012regret,DBLP:journals/corr/Zhou15c} with a single shared
set of model parameters, and all prior users' data can be leveraged to fit those
model parameters for use in interacting with a new user. Example algorithms are
LinUCB~\cite{li2010contextual}, Thompson sampling with linear
payoffs~\cite{agrawal2013thompson}, and
CofineUCB~\cite{yue2012hierarchical}. However, these approaches work well only
when there are many available features that describe users and capture user
variability. If those features are not available, then we may need to fall back
on a population average that may make poor recommendations for the current new
individual.

At the other extreme is to use learning algorithms such as LinUCB and Thompson
sampling with linear payoffs to learn from scratch for each new user
separately. Such systems can provide full personalization to an individual
(using model parameters learned only for that user), but may take an enormous
amount of interactions to achieve this, yielding very little value for a long
period (and potentially causing the user to get frustrated or cease using the
system).

We instead propose an approach that provides \textit{partial
  personalization}. We assume that users can be described as each belonging to
one of a finite set of latent classes. Each class may be associated with a
different set of model parameters, but within a class all individuals share the
same parameters. Compared with the two extreme approaches mentioned above,
partial personalization does not fully rely on user features to capture user
variability, instead it leverages users' latent class structure to more quickly
start providing good recommendations for new users.

Latent class structure has been explored in the non-contextual Multi-armed
bandits setting \cite{lazaric2013sequential,maillard2014latent}. In the
contextual setting, the most closely related work is
CLUB~\cite{gentile2014online}. CLUB learns an underlying graph structure of
users based on user similarities and serves a group of users by taking advantage
of the learned graph structure. However, as we will show later, our algorithm is
theoretically and empirically better than CLUB.

In this paper, we focus on the latent contextual bandit setting. We consider
both the benefit of using a set of learned latent classes, and how we can obtain
such latent classes from a prior set of data or online. We also provide a
formal analysis of the regret in this setting, by building on recent progress on
latent variable learning of regression model mixtures using tensor
methods~\cite{tejasvi2013spectral}, to bound the performance obtained by
learning and leveraging a set of latent models learned from data. We then
demonstrate the benefit of our approach in simulation and an unbiased offline
evaluation using a large real world dataset, as well as a preliminary user
study. Our results suggest a substantial benefit of our latent contextual bandit
approach.

\section{Our Approach}
\subsection{Problem Formulation}
\label{sec:problem_formulation}
We assume there is a sequence of contextual bandit tasks, and each contextual
bandit task involves multi-step interactions with a particular
user.\footnote{While in the paper we assume users come sequentially and interact
  with our algorithm, our approach can also handle interleaved users. As we will
  describe later, in phase 1 of our algorithm, it doesn't matter if users
  interleave as LinUCB is used. In phase 2, we can fix the current set of
  clusters and parameters for a new user when he/she first arrive, and use that
  for the entire time we interact with that user.} Let there be $\tilde{U}$
users, where $\tilde{U}$ may be infinite. Each user $u$ belongs to one of a
finite set of $N$ latent classes. Denote by $c_u$ the (unknown) latent class of
user $u$. Users within the same latent class share similar interests and
behaviors. Each task (series of interactions with a single user) is assumed to
last for $T_u$ steps.

During a task, in each time step $t_u \in \{1,\ldots,T_u\}$, the algorithm
observes both the current user $u$ and a set $\mathcal{A}_{t_u}$ of arms (items)
together with their d-dimensional feature vectors $x_{t_u, a} \in \mathbb{R}^d$
for all $a \in \mathcal{A}_{t_u}$. $||x_{t_u, a}||_2 \le 1$. The feature vector
$x_{t_u, a}$ captures the information of both user $u$ and arm $a$ at time
$t_u$. For example, $x_{t_u, a}$ could be the linear concatenation of user and
arm feature vectors. We assume the size of $\mathcal{A}_{t_u}$ is fixed:
$|\mathcal{A}_{t_u}| = K$. These $K$ feature vectors are together referred to as
the \emph{context} $C_{t_u}$ at $t_u$: $C_{t_u} = \{x_{t_u, a} | a \in
\mathcal{A}_{t_u}\}$.

The algorithm then recommends an arm $a_{t_u}$ to the current user $u$, and
receives reward $r_{t_u} \in [0, 1]$ from the user. We assume that the reward is
a noisy linear function of the current user's latent class. More precisely, each
latent class $h \in \{1, ..., N\}$ is associated with an (unknown) weight vector
$\beta_h \in \mathbb{R}^d$. The reward of an arm $a \in \mathcal{A}_{t_u}$ is
given by
\begin{align*}
r_{t_u, a} = \beta_{c_u}^\top x_{t_u, a} + \epsilon_{c_u}
\end{align*}
where $\epsilon_{c_u}$ follows a Gaussian distribution with zero mean and
bounded variance. Let $a^*_{t_u} = \argmax_{a\in \mathcal{A}_{t_u}}
\beta^\top_{c_u} x_{t_u, a}$ be the arm with highest expected reward at time
$t_u$, and $a_{t_u}$ be the arm selected at $t_u$. Then the algorithm's goal is
to minimize the \emph{regret}, which is defined as
\begin{align*}
  \text{Reg}(\tilde{U}, T_{u:u \in \{1, ..., \tilde{U}\}})&  \\ =  
  \sum_{u=1}^{\tilde{U}}
  \sum_{t_u=1}^{T_u} & \beta_{c_u}^\top x_{t_u, a_{t_u}^*} -
  \sum_{u=1}^{\tilde{U}} \sum_{t_u=1}^{T_u} \beta_{c_u}^\top x_{t_u, a_{t_u}}
\end{align*}
\subsection{Latent Contextual Bandits}
\label{sec:lccb}
Our algorithm, Latent Contextual Bandits (LCB), is described in Algorithm
\ref{algo:lb}. LCB learns the set of latent models from prior users and
leverages the learned models to make recommendations for new users. The
algorithm consists of two phases. In phase 1, LCB simply runs LinUCB algorithm
on the first $J$ users and collects the pulled arms and rewards. (line $2-11$,
Algorithm \ref{algo:lb}). The reason to do phase 1 is that initially when LCB
starts from scratch, there are no prior users or training data for LCB to learn
the latent models. Therefore, phase 1 is the bootstrap phase of the algorithm. A
short phase 1 will cause a high model estimation error at the early stage of
phase 2, while a long phase 1 will cause large regret in phase 1. In Section
\ref{sec:theory} we will discuss how to pick the length of phase 1 to get low
overall regret bound. In real world systems, usually we already have a huge set
of interactions made by prior users, then phase 1 is not needed.

In phase 2, LCB train/re-train latent models using data collected in both phases
1 and 2 so far (line $13$, Algorithm \ref{algo:lb}). We will show how to learn
the latent models in Section \ref{sec:mlr}. In practice, we may want to re-train
the latent models after a batch set of users instead of each user.

Meanwhile, in phase 2 LCB should leverage the learned latent models to improve
performance for new tasks (users). Though there are many ways to do this, we
propose an approach that first constructs a policy for each learned latent
model, and then uses a contextual bandit algorithm that can adaptively select
across the policies for a new task. A policy is a function that takes a context
as input and returns an arm or a distribution over arms. For example, one policy
could be a function that always return the arm with the highest expected reward
estimated by a learned latent model. There already exist numerous contextual
bandit algorithms that take as input a finite set of policies and compete with
the best policy inside the policy
set~\cite{beygelzimer2011contextual,agarwal2014taming}, so LCB can build upon
these existing works. However, LCB in phase 2 offers multiple advantages
relative to these prior works: the policy set $N$ is often smaller than the set
of policies considered by generic contextual bandit approaches; LCB
automatically constructs the set of policies (instead of requiring an oracle or
expert to provide a good set); and assuming the problem setting holds, the set
of $N$ policies is sufficient to enable optimal performance for any new task, in
contrast to standard contextual bandit approaches which can only achieve
performance as good as the input policies (which may not achieve optimal
performance).

More precisely, LCB constructs one policy for each learned latent model (line
$14$, Algorithm \ref{algo:lb}), and then runs a pre-selected contextual bandit
algorithm $\mathcal{B}$ that takes in the set of $N$ learned policies for the
$N$ latent contextual bandit tasks (line $15-22$, Algorithm \ref{algo:lb}). We
will discuss specific choices of $\mathcal{B}$ and ways to construct policies in
Section \ref{sec:bandits}, and we will shortly provide a theoretical analysis of
our approach in Section \ref{sec:theory}.
 
\IncMargin{0.5em}
\begin{algorithm}[!t]
\SetAlgoNoLine
\LinesNumbered
\SetAlgoNlRelativeSize{0}
\SetNlSty{}{}{:}
\SetAlCapHSkip{0em}
\SetCommentSty{emph}
\DontPrintSemicolon
\caption{Latent Contextual Bandits (LCB)}
\label{algo:lb}
\Indm
  \KwIn{$J\in \mathbb{R}^+$: number of users in phase 1 \newline
        $N \in \mathbb{R}^+$: number of latent classes \newline
        $\mathcal{P}$: policies construction strategy \newline
        $\mathcal{B}$: contextual bandit algorithm}

\BlankLine
\Indp
  Samples $\mathcal{D} = \varnothing$ \;
  \BlankLine
  \tcp{Phase $1$}
  Create a LinUCB instance \;
  \For{user $u \in \{1, 2, ..., J\}$} {
    \For{$t_u \in \{1,2, ..., T_u\}$} {
      Observe context $C_{t_u} = \{x_{t_u,a} \in \mathbb{R}^d | a \in
      \mathcal{A}_{t_u}\}$ \;
      Pull $a_{t_u} = \mathrm{LinUCB}(C_{t_u})$  \;
      Observe reward $r_{t_u}$\;
      Update LinUCB based on reward \;
      Add $(x_{t_u, a_{t_u}}, r_{t_u})$ to $\mathcal{D}$ \;
    }
  }
  
  \BlankLine 
  \tcp{Phase $2$}
  \For{user $u \in \{J+1, J+2, ...\}$}{
    Learn $N$ latent models $\{\hat{\beta}_1, ..., \hat{\beta}_N\}$ using data
    $\mathcal{D}$ \;
    Construct $N$ policies $\{\mathcal{P}(\hat{\beta}_1, \ \cdot\ ), ...,
    \mathcal{P}(\hat{\beta}_N,\ \cdot\ )\}$  \;
    Create a $\mathcal{B}$ instance for $u$\;
      \For{$t_u \in \{1, 2, ...T_u\}$}{
      Observe context $C_{t_u} = \{x_{t_u,a} \in \mathbb{R}^d | a \in
      \mathcal{A}_{t_u}\}$ \;
      Pull
      $a_{t_u} = \mathcal{B}_u(C_{t_u}, \{\mathcal{P}(\hat{\beta}_1,\cdot\ ), ..., \mathcal{P}(\hat{\beta}_N,\cdot\ )\})$ \;
      Observe reward $r_{t_u}$\;
      Update $\mathcal{B}_u$ based on reward\;
      Add $(x_{t_u, a_{t_u}}, r_{t_u})$ to $\mathcal{D}$ \;
      }
}
\end{algorithm}

\subsection{Learn Latent Models from Past Users}
\label{sec:mlr}
We model latent user classes using a mixture of linear
regressions~\cite{viele2002modeling}. A mixture of linear regressions consists
of $N$ mixture components, each is a linear regression model. Let
$\Theta=\{\pi_h, \beta_h, \sigma^2_h | h \in \{1,...,N\}\}$ be the model
parameters, where $\pi_h$ is the mixture proportion, $\beta_h$ is the
coefficient vector, and $\sigma^2_h$ is the variance of the response. Then the
likelihood of mixture of linear regressions is defined as
\small
\begin{align*}
\mathrm{L}(\Theta; \mathcal{D}) = \prod_{(r, x) \in \mathcal{D}}\left( \sum_{h=1}^N
  \pi_h \mathcal{N}\left(r|\beta_h^\top x, \sigma_h^2\right) \right)
\end{align*}
\normalsize
where $\mathcal{N}(r|\mu,\sigma^2)$ is the probability density function of a
Gaussian distribution with mean $\mu$ and variance $\sigma^2$. One classic
algorithm to learn a mixture model is the Expectation-Maximization (EM)
algorithm. However, EM does not guarantee convergence to the globally optimal
parameters, and it does not provide finite sample guarantees on the quality of
the resulting parameter estimates. On the other hand, tensor decomposition based
methods, as we will describe shortly, give us finite sample guarantees which can
be further used to derived our regret bound.
\subsubsection{Learn Latent Models using Spectral Experts}
Anandkumar et al.~\shortcite{anandkumar2014tensor} showed that tensor
decomposition can efficiently recover parameters for a wide class of latent
variable models. They exploited a special tensor structure derived from second
and third-order moments of the observations, and apply the robust tensor power
method to recover model parameters. Spectral Experts~\cite{tejasvi2013spectral},
built on top of Anandkumar et al.'s work, provide provably consistent estimator
for mixture of linear regressions. Our algorithm uses Spectral Experts to
estimate parameters of mixture of linear regressions. Later in Section
\ref{sec:theory} we also use the parameter error bound provided by Spectral
Experts to bound the regret of our algorithm.

Though Spectral Experts algorithm has appealing theoretical properties, it is
not particularly sample efficient and it is computationally
expensive. Therefore, in the following section, we also derive and implement a
computationally efficient Gibbs sampling based procedure to estimate parameters
of mixtures of linear regressions.

\subsubsection{Learn Latent Models using Gibbs Sampling}
Gibbs sampling is an efficient inference technique to learn
latent models for large scale dataset. We derive a sampling procedure for
Dirichlet Process~\cite{neal2000markov} mixtures of linear regressions. By using
a Dirichlet Process prior, we do not need to specify the number of latent
models. Specifically, we assume the prior of $\beta_h$ and $\sigma_h^2$ follow
the Normal-inverse-Gamma distribution and the prior of $\pi$ follows GEM
distribution~\cite{murphy2012machinegem}, which is used by the stick-breaking
construction of the Dirichlet process. The generative process is as follows:
\begin{enumerate}[leftmargin=0.5cm]
\item $\alpha \sim \text{Gamma}(u_0, v_0)$
\item $\pi \sim \text{GEM}(\alpha)$ 
\item For each latent model $h \in \{1, ..., N\}$
\begin{enumerate}
\item $(\beta_h, \sigma_h^2) \sim \text{NIG}(w_0, V_0, a_0, b_0)$
\end{enumerate}
\item For each user $u \in \{1, ..., U\}$
\begin{enumerate}
\item $c_u \sim \text{Categorical}(\pi)$
\item For each interaction $t_u \in \{1, ..., T_u\}$
\begin{enumerate}
\item $r_{t_u} \sim \mathcal{N}(\beta_{c_u}^\top x_{{t_u}, a_{t_u}},
  \sigma_{c_u}^2)$
\end{enumerate}
\end{enumerate}
\end{enumerate}
We use collapsed Gibbs sampling to sample $c_u$ and $\alpha$. Denote all the
rewards of a user $u$ by $r_u = \{r_{t_u}:t_u \in \{1, ..., T_u\}\}$. To sample
$c_u$,
\begin{align}
&P(c_u=h | c_{-u}, r_u, \alpha, w_0, V_0, a_0, b_0)  \notag \\
&\propto P(c_u = h | c_{-u}, \alpha)P(r_u|r_{-u}^h, w_0, V_0, a_0, b_0) 
  \label{eq:sample_z}
\end{align}
where $c_{-u} = \{c_{u'} : u' \ne u\}$ and  $r_{-u}^h = \{r_{u'}: c_{u'}=h, u'
\ne u\}$. The first term in Equation (\ref{eq:sample_z}) is given by the Chinese
Restaurant Process (CRP) \cite{neal2000markov}, the second term in Equation
(\ref{eq:sample_z}) is the posterior predictive distribution of $r_u$ given
$r_{-u}^h$, and it follows Multivariate t-distribution
\cite{murphy2012machine}. To sample $\alpha$, we adopt the auxiliary variable
method \cite{escobar1995bayesian}.

\subsection{Leverage Learned Models for New Users}
\label{sec:bandits}
Let $\{\hat{\beta}_1, ..., \hat{\beta}_N\}$ be the $N$ learned latent models. We
define $N$ policies based on these models. There are two types of policies we
can define, one is deterministic, and the other one is probabilistic. The
deterministic one maps a context $C_{t_u}$ to an arm $a \in \mathcal{A}_{t_u}$:
\begin{align*}
\mathcal{P}(\hat{\beta}_h, C_{t_u}) = \argmax_{a \in \mathcal{A}_{t_u}}
  \hat{\beta}_h^\top x_{{t_u}, a}
\end{align*}
The probabilistic one maps a context $C_{t_u}$ to a categorical distribution
over arms:
\begin{align*}
\mathcal{P}(\hat{\beta}_h, C_{t_u}) = [p_{1}, p_{2}, ..., p_{|\mathcal{A}_{t_u}|}]
\end{align*}
where
\begin{align*}
p_{a} = \frac{\exp (\hat{\beta}_h^\top x_{{t_u}, a})}{\sum_{a\in \mathcal{A}_u}
  \exp (\hat{\beta}_h^\top x_{{t_u}, a})}
\end{align*}

The constructed polices can be used by many contextual bandit algorithms to
serve new users. If the policies are deterministic, possible contextual bandit
algorithms include Epoch-Greedy~\cite{NIPS2007_3178},
ILOVETOCONBANDITS~\cite{agarwal2014taming}, and Generalized Thompson
Sampling~\cite{DBLP:journals/corr/Li13e}. If the policies are probabilistic,
possible contextual bandit algorithms include EXP4~\cite{auer2002nonstochastic}
and EXP4.P~\cite{beygelzimer2011contextual}. The algorithm choice depends on the
desired outcome, and we will shortly consider specific choices for both our
theoretical analysis and empirical results.

\section{Theoretical Analysis}
\label{sec:theory}
In this section, we analyze LCB's expected regret. We assume the latent models
are learned using the Spectral Experts algorithm. Let $J$ be the number of users
in phase 1, $U$ be the number of users in phase 2, and $\tilde{U}=J+U$ be the
total number of users. Let $S = \sum_{u=1}^{J} T_u$, $T =
\sum_{u={J+1}}^{\tilde{U}}T_u$, and $\tilde{T} = S+T$ be the total number of
interactions. We denote the first $n$ positive integers by $[n]$. For
convenience, we define the \emph{true policy} of a user $u \in [\tilde{U}]$ as
the deterministic policy constructed by $\beta_{c_u}$ (the true latent model the
user belongs to).

For theoretical analysis, we make two minor changes to the Algorithm
\ref{algo:lb}. First, instead of running a single LinUCB instance for all users
in phase 1, we run a separate LinUCB instance for each user. The reason is that
under our realizability assumption (each user belongs to one of the latent
models) single LinUCB instance which runs for all users has linear regret
$O(S)$. Second, we collect $\tau_u$ i.i.d.\ samples from each user $u \in
[\tilde{U}]$, that is, we select arms uniformly at random for the first $\tau_u$
interactions for each user $u \in [\tilde{U}]$. When training latent models
using Spectral Experts, we only use these i.i.d.\ samples. We do this because
Spectral Experts requires i.i.d.\ training examples to get theoretical guarantee
on the parameter error bound.

Assume $T_u \le L$ for all $u \in [\tilde{U}]$ for some constant $L$. Denote the
minimum Euclidean distance of any two latent models by $\bigtriangleup$, that
is, $||\beta_h - \beta_{h'}|| \ge \bigtriangleup$ for any $h \ne h'$. The
following two theorems show a problem-independent expected regret bound which is
independent of $\bigtriangleup$ and a problem-dependent expected regret bound
which depends on $\bigtriangleup$.
\begin{theorem}
  Set $\tau_u = \sqrt{T_u}$ for all $u \in [\tilde{U}]$ and $J=\sqrt{L}$. Assume
  $\mathcal{P}$ constructs deterministic policies. If $\mathcal{B}$ is a
  contextual bandits algorithm with optimal regret bound (e.g.\ EXP4.P), then
  the problem-independent expected regret bound of LCB with respect to the true
  policy is
\small
\begin{align*}
\mathbb{E}\left[\mathrm{Reg}_{\mathrm{LCB}}\left(\tilde{U}, T_{u:u \in [\tilde{U}]}\right)\right] &= O\left(\sqrt{JS} +
                                                      d\sqrt{JS\ln (1+S)}\right)  \\
&+ O\left(3\sqrt{UT} + \sqrt{UTK\ln N} \right) \\
&= O\left(\sqrt{UTK\ln N} \right)
\end{align*}
\normalsize
as $UT \gg \max\left\{1, \frac{d^2}{K}\right\}JS$, that is, as $T$ and $U$ grows
large. Similarly, if $\mathcal{B}$ is EXP3~\cite{auer2002nonstochastic} which
treats each learned policy as an arm, then the problem-independent expected
regret bound of LCB with respect to the true policy is
\begin{align*}
\mathbb{E}\left[\mathrm{Reg}_{\mathrm{LCB}}\left(\tilde{U}, T_{u:u \in [\tilde{U}]}\right)\right] = O\left(\sqrt{UTN\ln N} \right)
\end{align*}
as $UT \gg \max\left\{1, \frac{d^2}{N}\right\}JS$, that is, as $T$ and $U$ grows large.
\end{theorem}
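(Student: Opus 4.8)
The plan is to decompose the total regret into a phase-1 contribution and a phase-2 contribution, bound each separately, and then combine. For phase 1, since we run a separate LinUCB instance per user, each of the $J$ users incurs the standard LinUCB regret $O(d\sqrt{T_u \ln(1+T_u)})$; summing over the $J$ users and using $\sum T_u = S$ together with $T_u \le L$, Jensen/Cauchy--Schwarz gives a $\sqrt{JS}$-type bound, yielding the $O(\sqrt{JS} + d\sqrt{JS\ln(1+S)})$ term. I would also fold in the $O(\sum_u \tau_u)$ cost of the forced i.i.d.\ exploration rounds; with $\tau_u = \sqrt{T_u}$ this is $O(\sum_u \sqrt{T_u}) = O(\sqrt{JS})$ by Cauchy--Schwarz, which is absorbed. (A brief remark is needed that without the per-user split a single shared LinUCB has linear regret $O(S)$ under realizability, motivating the modification.)

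For phase 2 I would split the per-user regret further into (i) the regret of $\mathcal{B}$ against the \emph{best available learned policy}, and (ii) the gap between the best learned policy and the true policy $\mathcal{P}(\beta_{c_u},\cdot)$. Term (i) is just the regret guarantee of the chosen $\mathcal{B}$: for EXP4.P over $N$ experts and horizon $T_u$ this is $O(\sqrt{T_u K \ln N})$, and summing over the $U$ phase-2 users with Cauchy--Schwarz gives $O(\sqrt{UTK\ln N})$; for EXP3 treating each policy as an arm it is $O(\sqrt{T_u N \ln N})$, giving $O(\sqrt{UTN\ln N})$. Term (ii) is where the Spectral Experts guarantee enters: after phase 1 (and the accumulated i.i.d.\ samples), the estimated parameters satisfy $\|\hat\beta_h - \beta_h\| \le \varepsilon$ with high probability, where $\varepsilon$ decays polynomially in the number of i.i.d.\ samples collected so far; since $\|x\|_2 \le 1$, a deterministic policy built from $\hat\beta_h$ is $\varepsilon$-close in predicted reward to the true policy, so it only sub-optimally selects arms when the true reward gap is $O(\varepsilon)$, bounding the per-step loss by $O(\varepsilon)$ and the per-user loss by $O(\varepsilon T_u)$. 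Choosing $J = \sqrt{L}$ and $\tau_u = \sqrt{T_u}$ makes the cumulative number of i.i.d.\ samples grow fast enough that $\sum_u \varepsilon T_u$ is dominated by the $\sqrt{UT}$-scale terms, producing the residual $O(3\sqrt{UT})$ contribution; the constant $3$ presumably tracks the three failure/approximation events (model-recovery failure, clustering/identification of the new user's class, and the bandit concentration) each contributing an additive $\sqrt{UT}$.

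Finally I would collect the bounds: $O(\sqrt{JS} + d\sqrt{JS\ln(1+S)}) + O(3\sqrt{UT} + \sqrt{UTK\ln N})$, and observe that under $UT \gg \max\{1, d^2/K\} JS$ the phase-1 and $\sqrt{UT}$ terms are lower order than $\sqrt{UTK\ln N}$, giving the stated $O(\sqrt{UTK\ln N})$; the EXP3 case is identical with $K$ replaced by $N$ and the condition $UT \gg \max\{1,d^2/N\}JS$. The main obstacle I anticipate is term (ii): carefully quoting the Spectral Experts finite-sample error $\varepsilon$ as a function of the i.i.d.\ sample count, verifying that a per-step reward error of $\varepsilon$ translates into a per-step \emph{regret} of at most $O(\varepsilon)$ (not something worse), and—most delicately—arguing that the new user's latent class is correctly identified by $\mathcal{B}$ (or that misidentification costs no more than the bandit regret already accounts for), so that "best learned policy" really is within $O(\varepsilon)$ of the true policy. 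Getting the sample-count bookkeeping right so that $\sum_u \varepsilon T_u = O(\sqrt{UT})$ under the stated choices of $J$ and $\tau_u$ is the quantitative crux.
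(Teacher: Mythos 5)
Your decomposition matches the paper's proof essentially step for step: phase-1 regret bounded by per-user LinUCB regret plus the forced-exploration cost and summed via Cauchy--Schwarz, and phase-2 regret split into the regret of $\mathcal{B}$ against the best learned policy plus the gap between that policy and the true one, controlled by the Spectral Experts rate $\|\beta_{c_u}-\hat{\beta}_{c_u}\| = O(1/\sqrt{n})$. The one step you flag as delicate --- that the best learned policy is within $O(\varepsilon)$ of the true policy --- is closed in the paper with no class-identification argument at all: the learned set contains $\hat{\beta}_{c_u}$, the estimate of the user's own class; the deterministic policy it induces loses at most $2\|\beta_{c_u}-\hat{\beta}_{c_u}\|$ per step relative to $a^*_{t_u}$ (add and subtract $x^\top\hat{\beta}_{c_u}$, use $\|x\|_2\le 1$ and the fact that $\tilde{a}^*_{t_u}$ maximizes $\hat{\beta}_{c_u}^\top x$); and the best learned policy, defined as the argmax of true expected cumulative reward over the learned set, does at least as well. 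So your parenthetical ``misidentification costs no more than the bandit regret already accounts for'' is exactly the right resolution, and you should promote it from an anticipated obstacle to the actual argument. One small misreading: the constant $3$ in $3\sqrt{UT}$ does not count three failure events; it is $\sqrt{T_u}$ from the forced uniform exploration plus $2\sqrt{T_u}$ from bounding the estimation term $2T_u\|\beta_{c_u}-\hat{\beta}_{c_u}\| \le 2T_u/\sqrt{\sqrt{T_u}\,(u-1)} \le 2\sqrt{T_u}$ once $u-1 \ge J = \sqrt{L} \ge \sqrt{T_u}$, which is precisely where the choices of $J$ and $\tau_u$ enter your ``quantitative crux.''
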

\begin{theorem}
  Set $\tau_u = 3$ for all $u \in [\tilde{U}]$ and $J=L^2$. Assume $\mathcal{P}$
  constructs deterministic policies and $\mathcal{B}$ is Epoch-Greedy, then the
  problem-dependent expected regret bound of LCB with respect to the true policy
  is 
\begin{align*}
&\mathbb{E}\left[\mathrm{Reg}_{\mathrm{LCB}}\left(\tilde{U}, T_{u:u \in [\tilde{U}]}\right)\right] = \\
&O\left(3L^2+dL\sqrt{S\ln(L+1)} + \frac{UK}{\bigtriangleup^2} \left(\ln N + \ln (T+1)\right)\right)
\end{align*}
\end{theorem}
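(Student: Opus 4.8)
The plan is to bound $\mathbb{E}[\mathrm{Reg}_{\mathrm{LCB}}]$ by splitting it into the phase-1 cost and the phase-2 cost, and then splitting the phase-2 regret against the true policy into (i) the regret of the best policy among the $N$ constructed policies against the true policy (a model-estimation term) and (ii) the regret of Epoch-Greedy against that best policy (a bandit term). For \emph{phase 1}: under the realizability assumption each of the $J=L^2$ phase-1 users is a $d$-dimensional linear bandit, so its dedicated LinUCB instance incurs regret $O(d\sqrt{T_u\ln(1+T_u)})$ after the first $\tau_u=3$ forced-uniform rounds, and those rounds cost at most $\tau_u=3$ in regret since rewards lie in $[0,1]$. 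Summing over $u\in[J]$ and using $\tau_u=3$, $T_u\le L$, Cauchy--Schwarz ($\sum_{u\le J}\sqrt{T_u}\le\sqrt{JS}$), $\ln(1+T_u)\le\ln(L+1)$, and $J=L^2$ (hence $\sqrt{J}=L$) yields a phase-1 bound of $O(3L^2 + dL\sqrt{S\ln(L+1)})$, the first two terms of the claim.

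Next I would control model quality in phase 2. When user $J{+}1$ arrives, Spectral Experts is trained on the $3J=3L^2$ i.i.d.\ samples collected in phase 1 (and on more samples as phase 2 proceeds, which only helps). By its finite-sample guarantee, on an event of probability at least $1-\delta$ we have $\max_h\|\hat\beta_h-\beta_h\|\le\varepsilon(m)$, where $\varepsilon(m)$ decreases in the sample size $m$ with constants depending on the separation $\bigtriangleup$, the dimension $d$, and the noise variances; choosing $\delta=1/\tilde T$ makes the complementary event contribute only $O(1)$ to the expected regret, since per-step regret is at most $1$ over the $\tilde T$ total steps. Because $J=L^2$, the error at the start of phase 2 is already below the threshold needed, so by the elementary argmax inequality the policy $\mathcal{P}(\hat\beta_{c_u},\cdot)$ has per-step regret against the true policy at most $2\|\hat\beta_{c_u}-\beta_{c_u}\|\cdot\max_a\|x_{t_u,a}\|\le 2\varepsilon(m)$; hence the best of the $N$ constructed policies has total regret against the true policy at most $\sum_{u} 2T_u\,\varepsilon(m_u)$, which is of lower order relative to the remaining term when $U$ and $T$ grow large.

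For the Epoch-Greedy term, a fresh Epoch-Greedy instance is run over the $N$ constructed policies for each phase-2 user $u$, and its uniform-exploration rounds double as the $\tau_u=3$ i.i.d.\ samples, so no extra exploration cost is incurred. Invoking the problem-dependent guarantee for Epoch-Greedy, its regret against the best policy in the set is $O\big(\frac{K}{\bigtriangleup_{\mathrm{pol}}^2}(\ln N + \ln(1+T_u))\big)$, where $\bigtriangleup_{\mathrm{pol}}$ is the reward gap between the best policy for user $u$ and the best suboptimal one, and $K$ enters through the variance of the importance-weighted policy-value estimates built from uniform exploration over $K$ arms. The crucial step is to show $\bigtriangleup_{\mathrm{pol}}=\Omega(\bigtriangleup)$: the true weight vectors are $\bigtriangleup$-separated in Euclidean norm and the exploration contexts come from a fixed non-degenerate distribution, so the value of the policy induced by $\beta_{c_u}$ exceeds that of any policy induced by $\beta_h$ ($h\ne c_u$) by a constant multiple of $\bigtriangleup$, and by the $2\varepsilon(m)$ closeness above this carries over to the estimated policies once $\varepsilon(m)$ is small enough (guaranteed by $J=L^2$). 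Substituting $\bigtriangleup_{\mathrm{pol}}=\Omega(\bigtriangleup)$, bounding $T_u\le T$, and summing over the $U$ phase-2 users gives $O\big(\frac{UK}{\bigtriangleup^2}(\ln N + \ln(T+1))\big)$.

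Adding the phase-1 bound, the lower-order model-estimation term, the Epoch-Greedy term, and the $O(1)$ failure-event cost produces the stated bound. I expect the main obstacle to be making the $\bigtriangleup$-dependence rigorous: pinning down the separation-dependent finite-sample error of Spectral Experts, verifying that $3L^2$ i.i.d.\ samples drive that error below the threshold at which induced-policy value gaps are $\Omega(\bigtriangleup)$, and---the most delicate link---converting the Euclidean separation $\bigtriangleup$ of the latent weight vectors into an $\Omega(\bigtriangleup)$ lower bound on the policy-value gap $\bigtriangleup_{\mathrm{pol}}$ that drives Epoch-Greedy's $1/\bigtriangleup^2$ factor. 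The phase-1 accounting and the summation-over-users bookkeeping are routine by comparison.
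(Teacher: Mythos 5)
Your decomposition matches the paper's proof essentially step for step: phase-1 regret is the $3J=3L^2$ forced-exploration cost plus per-user LinUCB regret combined via Cauchy--Schwarz, and phase-2 regret is split into the model-estimation term $2T_u\|\hat\beta_{c_u}-\beta_{c_u}\|$ (made $O(1)$ per user by Spectral Experts' $O(1/\sqrt{n})$ rate together with $J=L^2$) plus Epoch-Greedy's problem-dependent bound summed over the $U$ users. The one point where you are more careful than the paper is precisely the step you flag as delicate: the paper substitutes the Euclidean separation $\bigtriangleup$ directly into Epoch-Greedy's $1/\bigtriangleup^2$ bound without arguing that the induced policy-value gap is $\Omega(\bigtriangleup)$, so the obstacle you identify is a gap in the paper's own argument rather than in yours.
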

\begin{proof}[\textbf{Proof} {(Theorem 1)}]
The expected regret of LinUCB~\cite{DBLP:journals/corr/Zhou15c} is
\begin{align*}
\mathbb{E}\left[\mathrm{Reg}_{\mathrm{LinUCB}}(T_u)\right] = O(d\sqrt{T_u\ln(1+T_u)})
\end{align*}
so in phase 1 the expected regret of LCB is
\small
\begin{align}
&\mathbb{E}\left[\mathrm{Reg}_{\mathrm{LCB}}^{phase\_1}\left(J, T_{u:u \in [J]}\right)\right] \nonumber \\
&= O\left(\sum_{u=1}^J \left( \sqrt{T_u} + d\sqrt{(T_u-\sqrt{T_u})\ln(1+T_u-\sqrt{T_u})} \right)\right) \nonumber \\
&= O(\sqrt{JS} + d\sqrt{JS\ln(1+S)}) \label{eq:reg_phase1_proof}
\end{align}
\normalsize
where Equation (\ref{eq:reg_phase1_proof}) 
follows from the Cauchy-Schwarz inequality. 

We next need to bound the regret in phase 2. For each user $u \in \{J+1,..., J+U\}$ in
phase 2, define
\begin{align}
\hat{\beta}_u^* = \argmax_{\hat{\beta} \in \{\hat{\beta}_1, ..., \hat{\beta}_N\}}\sum_{t_u=1}^{T_u}
\mathrm{E}[r_{t_u, \mathcal{P}(\hat{\beta}, C_{t_u})}] \label{eq:def_beta_hat}
\end{align}
as the best model of that user within all estimated models. Also recall that
$\beta_{c_u}$ is the true model of the user $u$ and $\hat{\beta}_{c_u}$ is the
estimate of $\beta_{c_u}$ returned by Spectral Experts. Let $a^*_{t_u},
\tilde{a}^*_{t_u}$ and $\hat{a}^*_{t_u}$ be the arm proposed by $\beta_{c_u},
\hat{\beta}_{c_u}$ and $\hat{\beta}^*_{u}$. $\hat{\beta}^*_u$ achieves the
highest expected cumulative reward based on its definition, so it achieves
higher expected cumulative reward than $\hat{\beta}_{c_u}$, so
\begin{align}
\sum_{t_u=1}^{T_u} x_{t_u, \hat{a}^*_{t_u}}^\top\beta_{c_u} \ge \sum_{t_u=1}^{T_u}
  x_{t_u, \tilde{a}^*_{t_u}}^\top\beta_{c_u} \label{eq:thm1_hat_tilde_gap}
\end{align}
Meanwhile, we can bound the gap between the expected cumulative reward achieved
by $\beta_{c_u}$ and by $\hat{\beta}_{c_u}$ as follows:
\begin{align}
&x_{t_u, a^*_{t_u}}^\top\beta_{c_u} - x_{t_u, \tilde{a}^*_{t_u}}^\top\beta_{c_u}
  \nonumber \\
&\le x_{t_u, a^*_{t_u}}^\top\beta_{c_u} - x_{t_u, \tilde{a}^*_{t_u}}^\top\beta_{c_u} +
  x_{t_u, \tilde{a}^*_{t_u}}^\top\hat{\beta}_{c_u} - x_{t_u, a^*_{t_u}}^\top\hat{\beta}_{c_u} \nonumber \\
&= (x_{t_u, a^*_{t_u}}^\top\beta_{c_u} - x_{t_u, a^*_{t_u}}^\top\hat{\beta}_{c_u}) +
  (x_{t_u, \tilde{a}^*_{t_u}}^\top\hat{\beta}_{c_u} - x_{t_u,
  \tilde{a}^*_{t_u}}^\top\beta_{c_u}) \nonumber \\
&\le 2||\beta_{c_u} - \hat{\beta}_{c_u}|| \label{eq:thm1_tilde_star_gap}
\end{align}
The last step uses the fact that $||x_{t_u, a}||_2 \le 1$. Using Equation
(\ref{eq:thm1_hat_tilde_gap}) and (\ref{eq:thm1_tilde_star_gap}) together we can
bound the gap between the expected cumulative reward achieved by $\beta_{c_u}$
and by $\hat{\beta}^*_{u}$:
\begin{align*}
\sum_{t_u=1}^{T_u}x_{t_u, a^*_{t_u}}^\top\beta_{c_u} - \sum_{t_u=1}^{T_u} x_{t_u,
  \hat{a}^*_{t_u}}^\top\beta_{c_u} \le 2T_u ||\beta_{c_u} - \hat{\beta}_{c_u}||
\end{align*}
Chaganty and Liang~\shortcite{tejasvi2013spectral} showed that $||\beta_{c_u} -
\hat{\beta}_{c_u}|| = O\left(\frac{1}{\sqrt{n}}\right)$ where $n$ is the number
of training examples. Now if $\mathcal{B}$ is a bandits algorithm with optimal
expected regret bound $O(\sqrt{T_u K\ln N})$, then in phase 2 the expected
regret of Latent Contextual Bandits is
\small
\begin{align}
&\mathbb{E}\left[\mathrm{Reg}_{\mathrm{LCB}}^{phase\_2}\left(U, T_{u:u \in
  \{J+1,..., 
  J+U\}}\right)\right] \nonumber \\
&= O\left(\sum_{u=J+1}^{J+U} \left( \sqrt{T_u} + \sqrt{T_uK\ln N} + 2T_u
  ||\beta_{c_u} - \hat{\beta}_{c_u}|| \right)\right) \nonumber \\
&= O\left(\sum_{u=J+1}^{J+U} \left( \sqrt{T_u} + \sqrt{T_uK\ln N} +
  \frac{2T_u}{\sqrt{\sqrt{T_u}(u-1)}} \right)\right) \nonumber \\
&= O\left(\sum_{u=J+1}^{J+U} \left( \sqrt{T_uK\ln N} +
  \frac{3\sqrt{T_u}}{\sqrt{(1+(u-J-1)/\sqrt{T_u})}} \right)\right) \label{eq:thm1_lb_phase2_2} \\
&= O(3\sqrt{UT} + \sqrt{UTK\ln N}) \label{eq:thm1_lb_phase2}
\end{align}
\normalsize
Equation (\ref{eq:thm1_lb_phase2_2}) follows from $J\ge\sqrt{T_u}$ for all
$u$. Equation (\ref{eq:thm1_lb_phase2}) follows from bounding the last term in
(\ref{eq:thm1_lb_phase2_2}) by $3\sqrt{T_u}$, and then applying Cauchy-Schwarz
inequality.\footnote{The last term actually decreases at the rate of
  $O(1/\sqrt{u})$ with respect to $u$, so our regret bound gets tighter as $u$
  increases in phase 2, but only by a constant factor.} Similarly, if
$\mathcal{B}$ is EXP3 which achieves a regret of $O(\sqrt{TN\ln N})$, then the
expected regret of LCB in phase 2 is $O(3\sqrt{UT}+\sqrt{UTN\ln N})$. Finally,
by adding the regret bound of phase 1 and phase 2, we prove the theorem.
\end{proof}
\begin{proof}[\textbf{Proof} {(Theorem 2)}]
  The proof of Theorem 1 shows that the regret in phase 1 is 
\begin{align*}
&\mathbb{E}\left[\mathrm{Reg}_{\mathrm{LCB}}^{phase\_1}\left(J, T_{u:u \in [J]}\right)\right] \nonumber \\
&= O\left(\sum_{u=1}^J \left( \tau_u + d\sqrt{T_u\ln(1+T_u)} \right)\right) \nonumber
\end{align*}
If $\mathcal{B}$ is Epoch-Greedy, then based on Epoch-Greedy's problem-dependent
bound we have
\small
\begin{align}
&\mathbb{E}\left[\mathrm{Reg}_{\mathrm{LCB}}^{phase\_2}\left(U, T_{u:u \in \{J+1,...,
  J+U\}}\right)\right] \nonumber \\
&= O\left(\sum_{u=J+1}^{J+U} \left( \tau_u + \frac{K\ln (N(T_u+1))}{\bigtriangleup^2} + 2T_u ||\beta_{c_u} - \hat{\beta}_{c_u}|| \right)\right) \nonumber \\
&= O\left(\sum_{u=J+1}^{J+U} \left( \tau_u + \frac{K\ln (N(T_u+1))}{\bigtriangleup^2} + \frac{2T_u}{\sqrt{u-1}} \right)\right) \nonumber 
\end{align}
\normalsize
Set $J = L^2$, then the last term is less than or equal to 1. Set $\tau = 3$,
then the regret of LCB, by adding the regret in phase 1 and phase 2, is
\begin{align*}
&\mathbb{E}\left[\mathrm{Reg}_{\mathrm{LCB}}\left(\tilde{U}, T_{u:u \in [\tilde{U}]}\right)\right] = \\
&O\left(3L^2+dL\sqrt{S\ln(L+1)} + \frac{UK}{\bigtriangleup^2} \left(\ln N + \ln (T+1)\right)\right)
\end{align*}
which proves the theorem.
\end{proof}
\begin{table*}[!hpt]
\centering
\small
\begin{tabular}{|l|l|l|l|}
\hline
Algorithm & Expected Regret & Regret with respect to & Regret Type \\ \hline
Population EXP4.P & $O(\sqrt{\tilde{T}K\ln N})$ & best policy in pre-defined policy set & problem-independent \\ \hline
Individual EXP4.P & $O(\sqrt{\tilde{U}\tilde{T}K\ln N})$  & best policy in
                                                            pre-defined policy
                                                            set &problem-independent \\ \hline
Population LinUCB~\footnotemark& $O(d\sqrt{\tilde{T}\ln(1+\tilde{T})})$ & best
                                                                          average
                                                                          policy
                                                                          of all
                                                                          users
                                            & problem-independent \\ \hline
Individual LinUCB & $O(d\sqrt{\tilde{U}\tilde{T}\ln(1+\tilde{T})})$ & true
                                                                      policy of
                                                                      each user
                                            & problem-independent \\ \hline
EXP4.P enum-policies & $O(\sqrt{\tilde{U}\tilde{T}KC\ln K})$ & true
                                                                        policy
                                                                        of each
                                                                        user &
                                                                        problem-independent \\ \hline
LCB (Theorem 1) & $O(\sqrt{UT\min\{K,N\}\ln N})$ & true policy of each user & problem-independent \\ \hline
CLUB & $O(
       N+\tilde{U}\sqrt{Nd}+\frac{\tilde{U}d}{\bigtriangleup^2}\ln (\tilde{T}+1)+dN\sqrt{\tilde{T}})$
                   & true policy of each user & problem-dependent\\ \hline
LCB (Theorem 2) & $O(3L^2+dL\sqrt{S} +
                  \frac{UK}{\bigtriangleup^2}(\ln N + \ln (T+1)) )$ & true policy of each user & problem-dependent
  \\ \hline
\end{tabular}
\caption{Expected regret bounds of LCB and baseline algorithms. We use $N$ to denote both
  the number of policies in the three EXP4.P variants and the number of latent models in
  LCB and CLUB. We use $C$ to denote the total number of contexts.}
\label{table:reg_compare}
\end{table*}

To put these results in context, we now compare our regret results to several
other approaches. We compare the following algorithms: 1) \textbf{Population
  EXP4.P}: runs a single EXP4.P model for all users, 2) \textbf{Individual
  EXP4.P}: runs a separate EXP4.P model for each user, 3) \textbf{Population
  LinUCB}: runs a single LinUCB model for all users, 4), \textbf{Individual
  LinUCB}: runs a separate LinUCB model for each user, 5) \textbf{EXP4.P
  enum-policies}: enumerates all policies by mapping all possible
contexts to all possible arms and then runs EXP4.P on each user (assuming
contexts are enumerable), 6) \textbf{CLUB}, 7) \textbf{LCB}. Table
\ref{table:reg_compare} shows the expected regret of each algorithm; it also
shows the policy each algorithm is competing with when deriving the regret
bound. Keep in mind that all comparisons are under our realizability assumption
that each user belongs to one of the latent models.

Within the 7 algorithms in Table \ref{table:reg_compare}, 3 of them does not
compete with the true policy of each user: Population LinUCB doesn't distinguish
between users from different classes, so it is competing with the best average
policy of all users; Population/Individual EXP4.P requires a set of pre-defined
policies as input, and compete with the best one inside the policy set instead
of the true policy of each user.

The remaining 4 algorithms all compete with the true policy of each user;
however, LCB achieves the best expected regret bound. The problem-independent
regret bound of Individual LinUCB is linear with respect to the contexts'
feature dimension which is often very large. Define $C$ as the total number of
contexts. EXP4.P enum-policies has a $C$ term in its
problem-independent regret bound which is often large or even infinite. LCB's
problem-independent regret bound, on the other hand, only has square root
dependence on $U, T$ and $\min\{K, N\}$.

Table \ref{table:reg_compare} also shows the problem-dependent regret bound of
CLUB analyzed by Gentile et al.~\shortcite{gentile2014online}. Both LCB and
CLUB's problem-dependent regret bounds depends on $1/\bigtriangleup^2$. However,
CLUB has a square root dependence on $\tilde{T}$, while LCB only has a square
root dependence on $S$ (number of interactions in phase 1) which is a
constant. Moreover, the analysis of CLUB (see Appendix of Gentile et
al.~\shortcite{gentile2014online}) shows that CLUB's expected regret on a new
user $u$ is linear with respect to $T_u$ when $T_u < B$ for some constant B that
is in the order of $O(\frac{d}{\bigtriangleup^2})$. $B$ may be enormous if
$\bigtriangleup^2$ is small. Meanwhile, LCB's problem-independent regret bound
guarantees that LCB's expected regret on a new user $u$ is always sublinear
(square root) with respect to $T_u$ even when $T_u$ is small.

\section{Experiments}
 In this section, we evaluate our algorithm both on simulation and on a large
 real world dataset from Yahoo!. We compare the following algorithms: 1)
 \textbf{LCB}: our approach. We choose Generalized Thompson Sampling as
 $\mathcal{B}$ in Algorithm \ref{algo:lb}. For simulation we use Spectral
 Experts to learn latent models, but due to time/memory constrains, for large
 real world dataset we use Gibbs sampling to learn latent models; 2)
 \textbf{LCB\_GT}: this is similar to LCB, except that instead of learning the
 latent models, we provide true latent models to the algorithm. 3)
 \textbf{CLUB}; 4) \textbf{Population LinUCB}: runs single LinUCB instance on
 all users; 5) \textbf{Individual LinUCB}: runs a separate LinUCB instance on
 each user; 6) \textbf{Random} : selects each arm uniformly at random.
\footnotetext{Under our realizability assumption, Population LinUCB has
  $O(\tilde{T})$ linear regret, so here we show the regret under its own realizability
assumption}
\subsection{Simulation}
We artificially created 5 latent models as shown in Figure
\ref{fig:sim_latent_models}. Each model had 10 parameters, and 4 of them were
assigned higher weights. Users were sampled uniformly at random from these
latent models. For each user interaction we generated 20 arms, that is,
$|\mathcal{A}_{t_u}|=20$. Each arm $a$ was associated with a feature vector
$x_{t_u, a}$ sampled uniformly from $[-1, 1]^{10}$, and was normalized so that
$||x_{t_u, a}||_2 = 1$. We sampled the reward of each arm $a$ from
$\mathcal{N}(\beta_{c_u}^\top x_{t_u, a}, \sigma^2)$ with $\sigma=0.1$. For LCB,
we set $J=50$, and in phase 2 we re-trained the latent models after every 50
users.
\begin{wrapfigure}{R}{0.25\textwidth}
  \centering
    \includegraphics[width=0.25\textwidth]{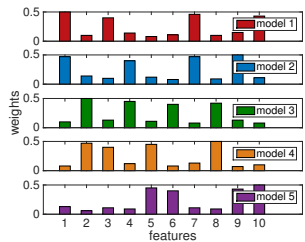}
  \caption{Models in simulation}
  \label{fig:sim_latent_models}
\end{wrapfigure}

In the first experiment, we fixed $T_u=20$ for all users, and reported the
averaged per-user regret vs.\ number of users. Results are shown in Figure
\ref{fig:sim_online}. We can see that the regret of LCB was about $35\%$ lower
than CLUB and $50\%$ lower than Population LinUCB. In the second experiment, we
fixed the number of users to $1000$, and varied $T_u$ from $10$ to $100$. $T_u$
were set to the same for all users. Figure \ref{fig:sim_num_steps} shows the
averaged per-user regret vs.\ $T_u$. We can see that LCB outperformed CLUB and
LinUCB, and as $T_u$ increased, the gap between their regret also increased.
Also, when $T_u$ was more than 40, Individual LinUCB started to learn a good model
for each user, and outperformed Population LinUCB, however, it still had much
higher regret than LCB.
\begin{figure}[h]
\centering
\begin{minipage}[c]{1\linewidth}
\begin{subfigure}{.495\textwidth}
  \centering
  \includegraphics[width=1\linewidth]{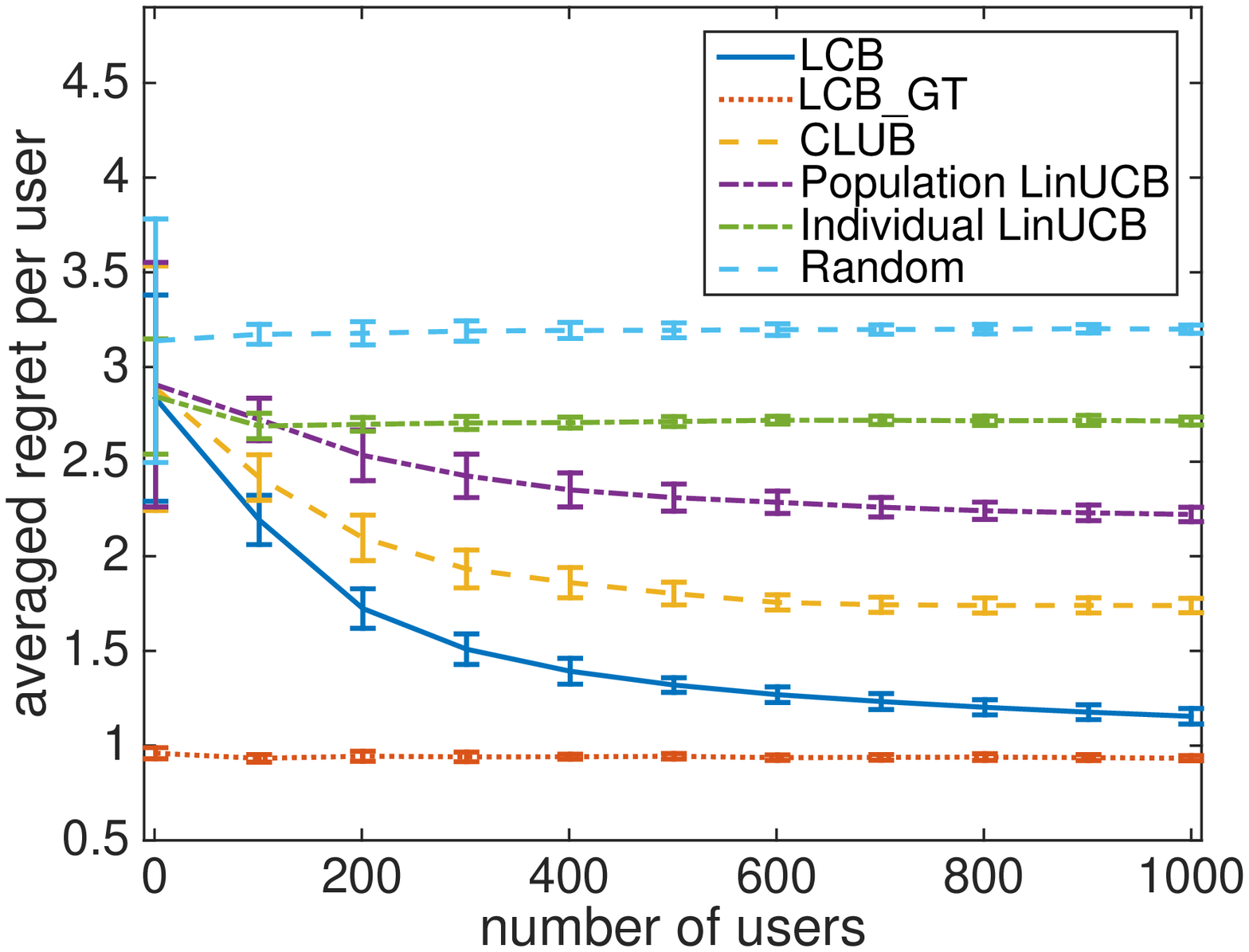}
  \caption{Averaged per-user regret vs.\ number of users.}
  \label{fig:sim_online}
\end{subfigure}
\begin{subfigure}{.495\textwidth}
  \centering
  \includegraphics[width=1\linewidth]{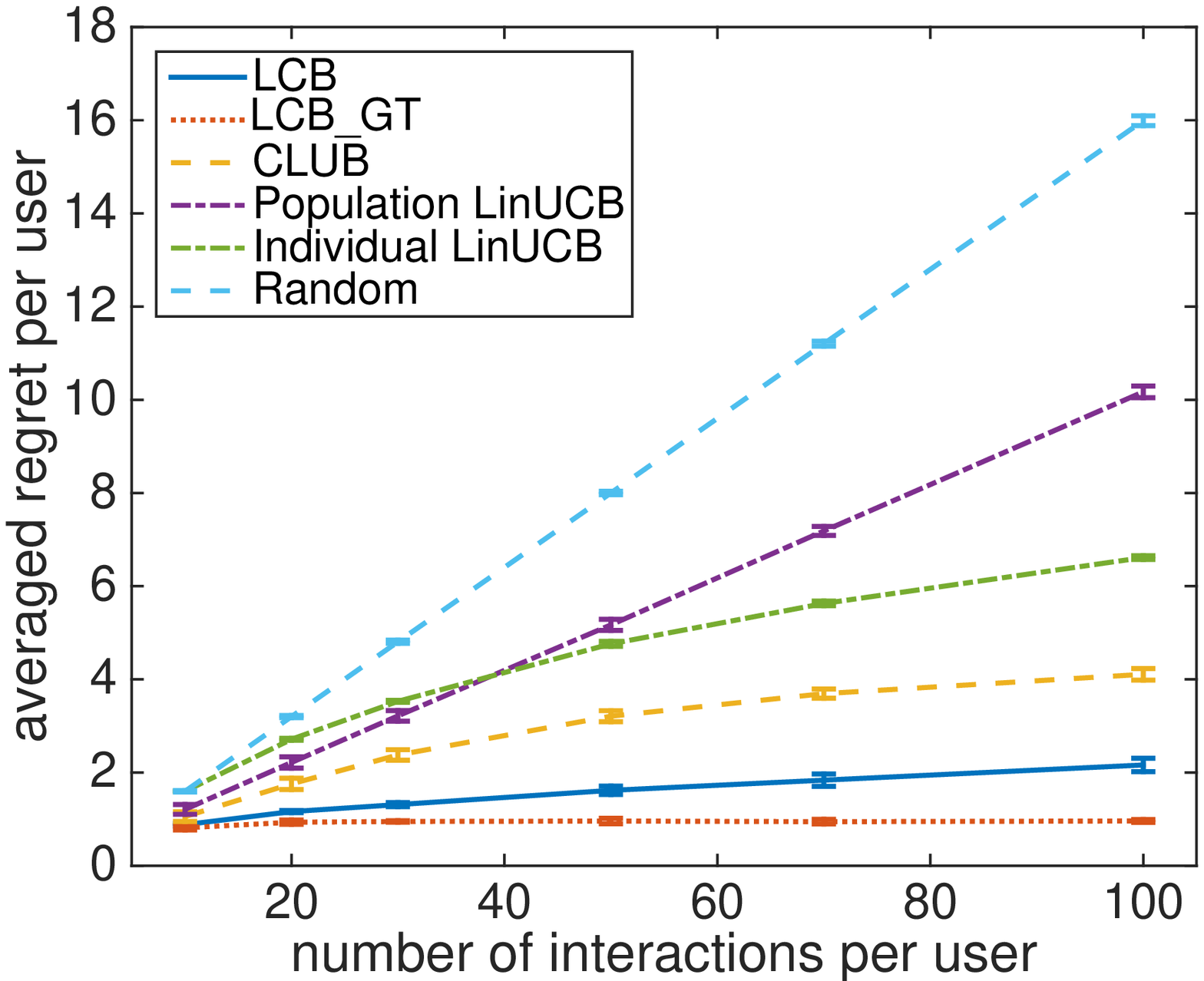}
  \caption{Averaged per-user regret vs.\ number of interactions per users.}
  \label{fig:sim_num_steps}
\end{subfigure}
\caption{Experiment results on simulation}
\label{fig:fig}
\end{minipage}
\end{figure}

\subsection{Experiments on Real World Dataset}
\label{exp:real_data}
\begin{figure*}[!ht]
  \centering
\begin{minipage}[c]{1\textwidth}
\begin{subfigure}{.33\textwidth}
  \centering
  \includegraphics[width=1\linewidth]{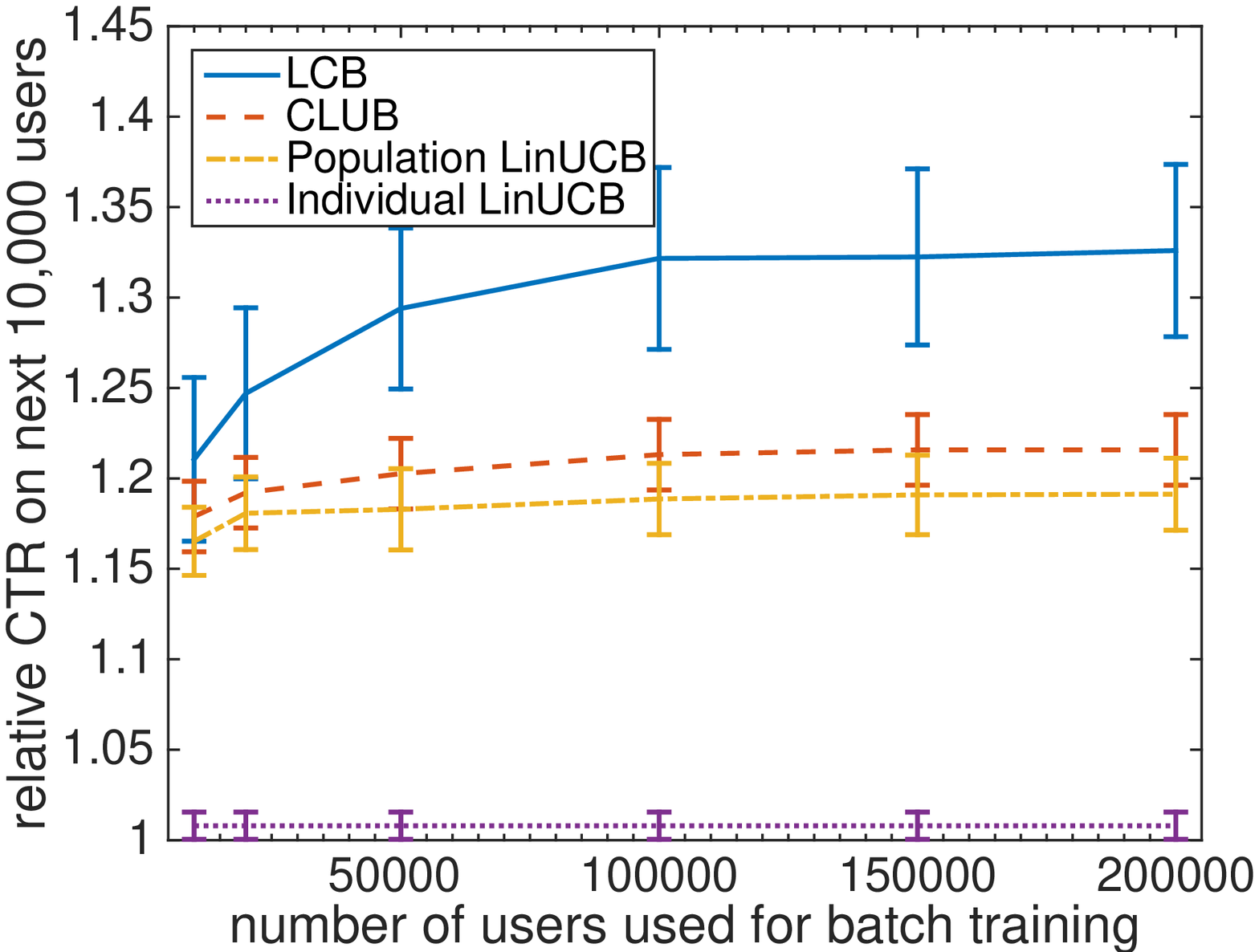}
  \caption{Relative CTR of 10,000 new users vs.\ number of users used for batch training.}
  \label{fig:yd_batch}
\end{subfigure}
\begin{subfigure}{.33\textwidth}
  \centering
  \includegraphics[width=1\linewidth]{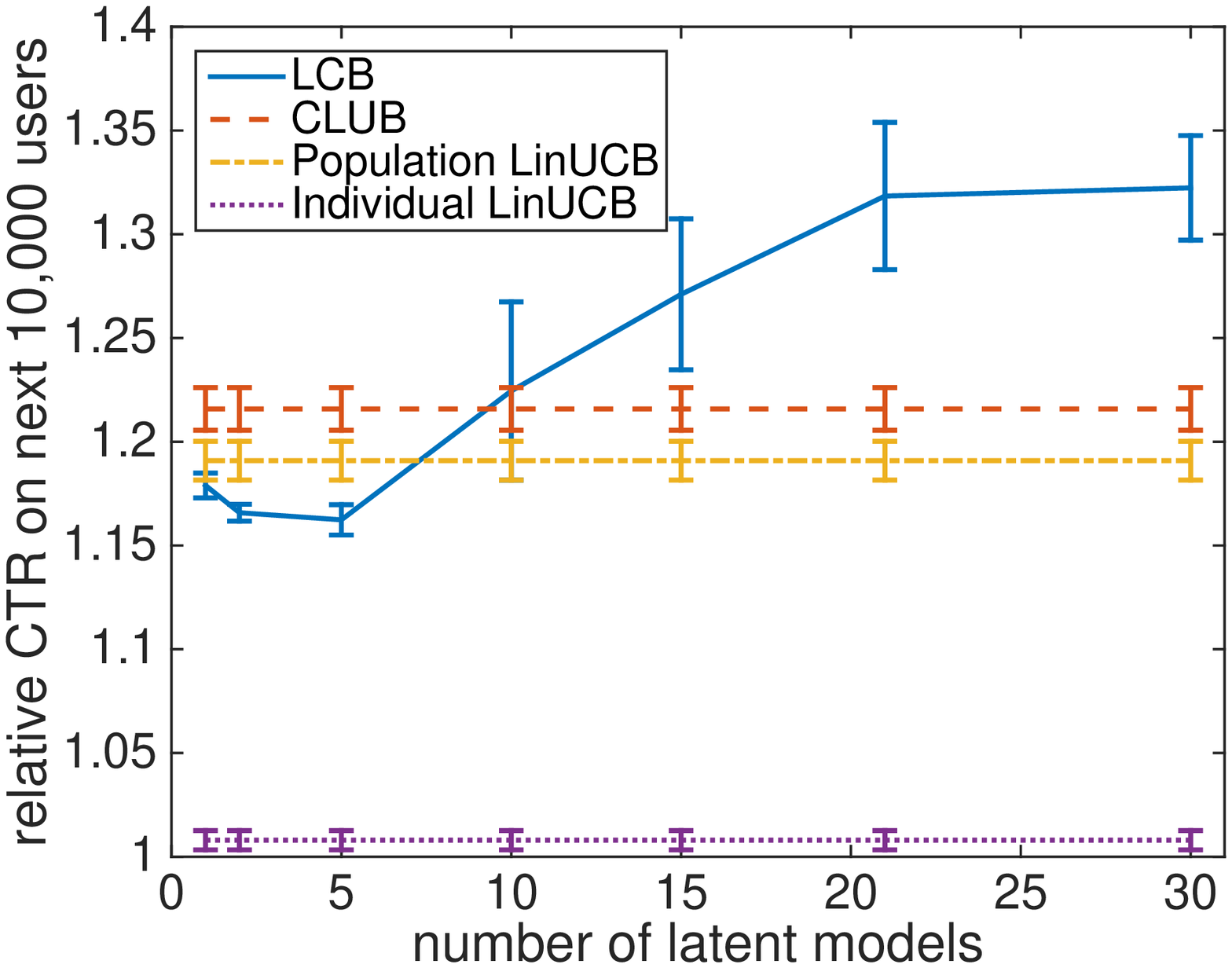}
  \caption{Relative CTR of 10,000 new users vs.\ number of latent models specified in batch training.}
  \label{fig:yd_num_models}
\end{subfigure}
\begin{subfigure}{.33\textwidth}
  \centering
  \includegraphics[width=1\linewidth]{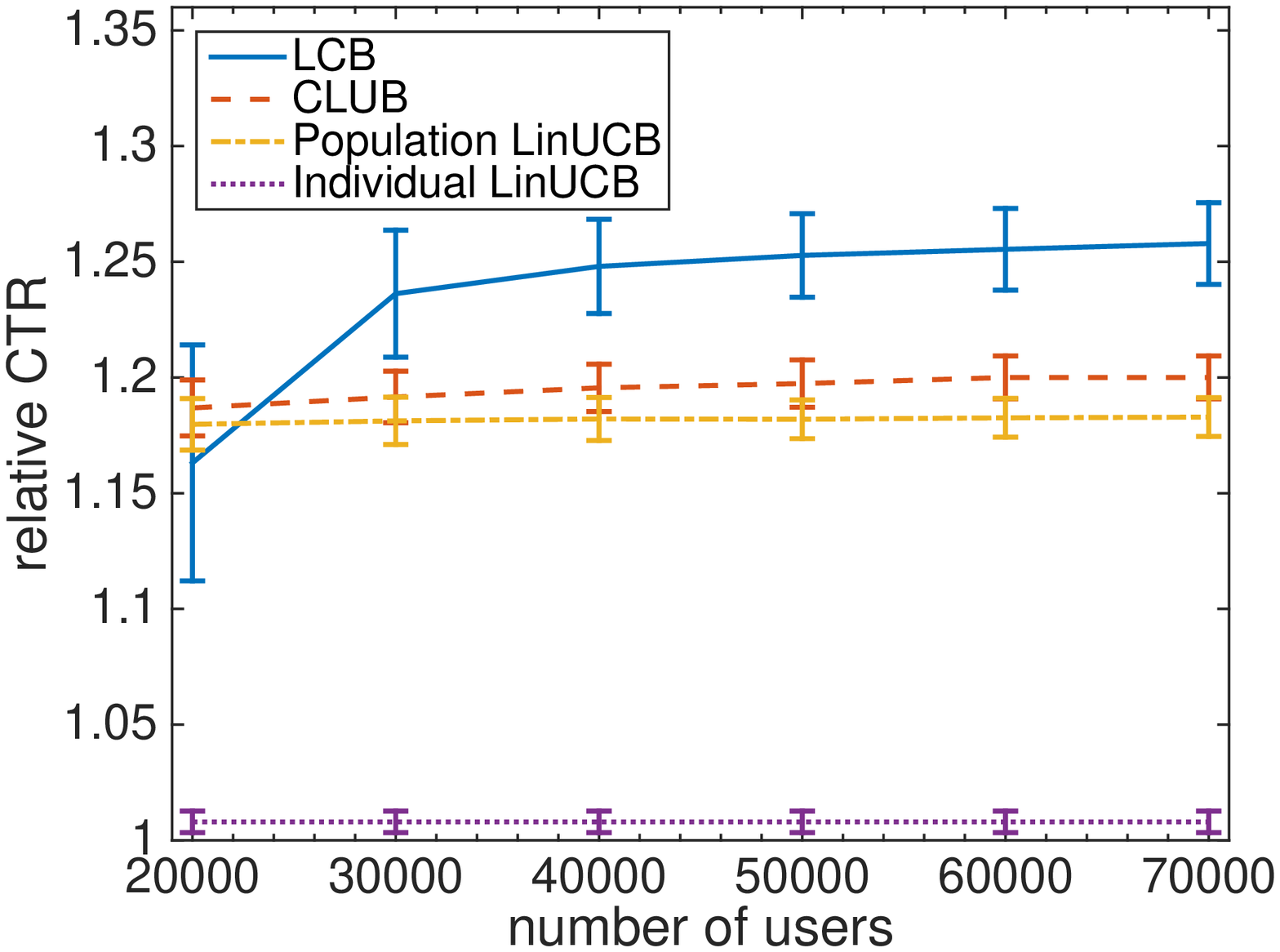}
  \caption{Relative CTR vs.\ number of users that have been interacted with the algorithm.}
  \label{fig:yd_online}
\end{subfigure}
\caption{Experiment results on a large real world news recommendation dataset.}
\label{fig:fig}
\end{minipage}
\end{figure*}
We evaluated our algorithm on a news feed dataset provided by Yahoo!. The
dataset contained $500,\!000$ users and all their visits in a one month
period. In each visit, a user was shown $25$ news articles from the top
down. User clicks (binary feedback) were logged. There were $21$ news
categories, and each news article belonged to $1\sim3$ categories. Therefore,
articles were represented as a 21-dimensional binary feature vector. User
features were not available because of privacy issues.

To the best of our knowledge, in our case there is no perfect solution for
\emph{unbiased} offline evaluation. For stationary algorithm one can use
propensity scoring~\cite{strehl2010learning}, however our algorithm is
nonstationary, and propensity score is not available in this dataset. One
state-of-the-art solution is rejection sampling based replay
method~\cite{li2011unbiased}. However, rejection sampling is quite sample
inefficient on our dataset because the policy which generated our dataset was
biased towards exploitation. Therefore, we adopted the queue
method~\cite{mandel2015queue}, a sample efficient offline evaluation method for
\emph{non-contextual} bandits, and extended it to our \emph{contextual} case.

To use the queue method, we defined arms as news categories instead of news
articles. As there were $21$ categories, we defined $21$ queues for each
user. The queues of each user were initialized with click labels ($0$ or $1$) of
articles shown to that user. For example, if an article belonged to two
categories, then its click label was added to the two corresponding
queues. Finally, we ran PCA to project the 21-dimensional article feature space
to a 6-dimensional lower space so that each category (arm) can be represented as
a dense vector.

We fixed the number of interactions per user ($T_u$) to $20$ for all users to
ensure all users were new users. We reported relative CTR, the algorithm's CTR
divided by the CTR in the data, due to confidentiality. In the first experiment,
we ran batch training: each algorithm was pre-trained with $20,\!000 -
200,\!000$ users, and then tested on the $10,\!000$ new users. Relative CTR of
the test users was reported. For LCB, we used the training users to learn $30$
latent models, then we directly ran phase 2 for the test users without
re-training latent models. Figure \ref{fig:yd_batch} shows the experiment
results. We can see that LCB achieved the highest CTR, and outperformed CLUB by
about $10\%$. Moreover, CLUB only outperformed Population LinUCB by about
$2\%$. One reason is that the rewards in our real dataset were binary and noisy,
so 20 samples per user ($T_u=20$) were not enough for CLUB to learn a good
regression model for each user and hence to learn a good latent graph
structure. In the second experiment, we varied the number of latent models of
LCB from $1$ to $30$. Similar to the batch training, each algorithm was
pre-trained with $150,\!000$ users and then tested on $10,\!000$ new users. The
result is shown in Figure \ref{fig:yd_num_models}. We can see that with 10 or
more models, LCB started to take the benefit of latent class structure and
outperformed CLUB and LinUCB. With $15$ and $30$ latent models, our approach
improved the CTR by about $5\%$ and $10\%$ respectively compared with CLUB and
Population LinUCB. The third experiment simulated the real world environment in
which users came sequentially and interacted with the algorithm. For LCB,
$20,\!000$ users were used in phase 1. In phase 2, $30$ latent models were
trained and re-trained after every $10,\!000$ users. To collect i.i.d data
points to better learn the latent models, we picked arms uniformly at random for
the first $5$ interactions of each user, and only used these data points to
train/re-train latent models. For all algorithms, we reported relative CTR after
every $10,\!000$ users. Results in Figure \ref{fig:yd_online} shows that LCB
achieved about $5\%$ higher CTR than CLUB and Population LinUCB.

\subsection{Pilot Results on User Study}
In this section, we show the pilot results of our user study with $10$ users,
$5$ for each algorithm. We compared two algorithms: Population LinUCB and
LCB. Since the experiments in Section \ref{exp:real_data} used the Yahoo!\ real
world dataset, so the learned models can be directly used for the user
study. For LCB, we used the learned latent models from Section
\ref{exp:real_data} and directly ran phase 2 of LCB. For Population LinUCB, we
used the learned Population LinUCB model from Section \ref{exp:real_data} to
initialize the LinUCB model used in the user study. Users interacted with the
algorithms through an app developed on the Android platform. $T_u=20$ for all
users. During each user interaction, the app requested $170$ latest news
articles from the Yahoo!\ news service in real time. Similar to Section
\ref{exp:real_data}, each news article was represented by a $21$-dimensional
vector. The algorithm then selected one of the articles for the user and
received user feedback (click). Table \ref{table:user_study} shows the CTR mean
and standard deviation achieved by these two algorithms. We can see from the
pilot results that LCB outperformed Population LinUCB. User study with more
users and algorithms is in progress.
\begin{table}[]
\small
\centering
\begin{tabular}{|c|c|}
\hline
Population LinUCB & LCB   \\ \hline
$0.196 \pm 0.096$  & $0.380 \pm 0.076$\\ \hline
\end{tabular}
\caption{CTR mean and standard deviation in user study.}
\label{table:user_study}
\end{table}
\section{Conclusion}
In this paper, we propose Latent Contextual Bandits, a contextual bandits
algorithm that learns the latent structure of users and leverages the learned
latent structure to make personalized recommendations for new users. We prove
both a problem-independent and a problem-dependent regret bound with respect to
the true policies of users. The regret bounds significantly improved over
baseline algorithms. We then demonstrate the benefit of our approach using both
simulation and an unbiased offline evaluation with a large real world dataset,
as well as a preliminary user study.

\section*{Acknowledgments}
This work was supported by the CMU-Yahoo!\ InMind project. We also gratefully
acknowledge the assistance and/or helpful feedback of Liangjie Hong, Suju Rajan,
Michal Valko, Saloni Potdar, Zhengyang Ruan, Linxi Zou, Mingzhi Zeng and the
pilot study participants.

\bibliographystyle{named}
\bibliography{lb_bib}
\end{document}